\def\eqref#1{equation~\ref{#1}}
\def\1{\bm{1}}
\def\vtheta{{\bm{\theta}}}
\def\vb{{\bm{b}}}
\def\vc{{\bm{c}}}
\def\vd{{\bm{d}}}
\def\vg{{\bm{g}}}
\def\vh{{\bm{h}}}
\def\vr{{\bm{r}}}
\def\vw{{\bm{w}}}
\def\vx{{\bm{x}}}
\def\vy{{\bm{y}}}
\def\vz{{\bm{z}}}
\def\mA{{\bm{A}}}
\def\mB{{\bm{B}}}
\def\mW{{\bm{W}}}
\DeclareMathAlphabet{\mathsfit}{\encodingdefault}{\sfdefault}{m}{sl}
\SetMathAlphabet{\mathsfit}{bold}{\encodingdefault}{\sfdefault}{bx}{n}
\newcommand{\softmax}{\mathrm{softmax}}
\newtheorem{thm}{Claim}
\newcommand{\reported}{{\fontfamily{lmr}\selectfont$\star$}}
\newcommand{\transformname}[0]{\ensuremath{\textrm{Transform}}}
\newcommand{\transformarg}[1]{\ensuremath{\transformname_{#1}}}
\newcounter{property}
\newenvironment{property}[1]{\refstepcounter{property}\par\underline{      \textbf{Property \theproperty. }}\space#1}{}
\title{Encoding word order in complex embeddings}
\author{Benyou Wang \thanks{ First two authors contributed equally. This work was partly done when Benyou Wang and Qiuchi Li were visiting University of Copenhagen under the supervision of Christina Lioma and  Jakob Grue Simonsen.} \\
University of Padua\\
\texttt{wang@dei.unipd.it} \\
\And
Donghao Zhao $^*$\\
Tianjin University\\
\texttt{zhaodh@tju.edu.cn} \\
\And
Christina Lioma \\
University of Copenhagen \\
\texttt{chrh@di.ku.dk} \\
\And
Qiuchi Li \\
University of Padua \\
\texttt{qiuchili@dei.unipd.it} \\
\And
Peng Zhang \thanks{Corresponding author: pzhang@tju.edu.cn }\\
Tianjin University \\
\texttt{pzhang@tju.edu.cn} \\
\And
 Jakob Grue Simonsen \\
University of Copenhagen \\
\texttt{simonsen@di.ku.dk} \\
}
\begin{document}

\maketitle

\begin{abstract}
Sequential word order is important when processing text. Currently, neural networks (NNs) address this by modeling word position using position embeddings. The problem is that position embeddings capture the position of individual words, but not the ordered relationship (e.g., adjacency or precedence) between individual word positions. We present a novel and principled solution for modeling both the global absolute positions of words and their order relationships. Our solution generalizes word embeddings, previously defined as independent vectors, to continuous word functions over a variable (position). The benefit of continuous functions over variable positions is that word representations shift smoothly with increasing positions. Hence, word representations in different positions can correlate with each other in a continuous function. The general solution of these functions is extended to complex-valued domain due to richer representations. We extend CNN, RNN and Transformer NNs to complex-valued versions to incorporate our complex embedding (we make all code available). Experiments \footnote{The code is on \url{https://github.com/iclr-complex-order/complex-order}} on text classification, machine translation and language modeling show gains over both classical word embeddings and position-enriched word embeddings. To our knowledge, this is the first work in NLP to link imaginary numbers in complex-valued representations to concrete meanings (i.e., word order).

\end{abstract}

\section{Introduction}
\label{sec:intro}


When processing text, the sequential structure of language is important, but can be computationally costly to model with neural networks (NNs) \citep{socher2011parsing} due to the difficulty in parallelization. This has been alleviated by modeling word sequence not on the NN architecture level, but by adding \textit{position embeddings} on the feature level. 
This has been done by the convolutional sequence model (ConvSeq)~\citep{gehring2017convolutional} 
and the Transformer model~\citep{vaswani2017attention} that replaces recurrent and convolution operations with purely attention mechanisms. 
More generally, vanilla position embeddings \citep{gehring2017convolutional} assume that individual word positions are independent and do not consider relations between neighbouring word positions. We posit that \textit{both} the global absolute positions of words \textit{and} their inner sequential and adjacent relationships are crucial in language. This is supported by recent empirical findings by \cite{shaw2018self} and \cite{dai2019transformer} who show the importance of modeling distance between sequential elements, and explicitly use extra relative position encodings to capture the relative-distance relationship of words.

We present a novel and principled approach to model both the global absolute positions of words and their inner sequential and adjacent relationships as follows: we extend each word embedding, previously defined as an independent vector, as a continuous function over an independent variable i.e., position.
The benefit of continuous functions over variable positions is that word representations shift smoothly with increasing positions. Hence, word representations in different positions can correlate with each other in a continuous function. Fig.~\ref{fig:position} illustrates this type of word representation with a three-dimensional complex-valued embedding, where the amplitudes $\{r_1, r_2,r_3\}$ denote semantic aspects corresponding to classical word vectors, and periods $\{p_1, p_2,p_3\}$ denote how sensitive the word is to positional information. 
We further discuss the necessary properties of these functions to model sequential information and obtain a general solution in the form of a complex-valued embedding. 
Interestingly, there is a direct connection between
a specific case of our general embedding and the well-known positional encoding in~\cite{vaswani2017attention} (see App.~\ref{sec:attention_complex}). 

\begin{figure}[t]
\small
    \centering
    \begin{minipage}[c]{0.55\textwidth}
    \includegraphics[width=70mm]{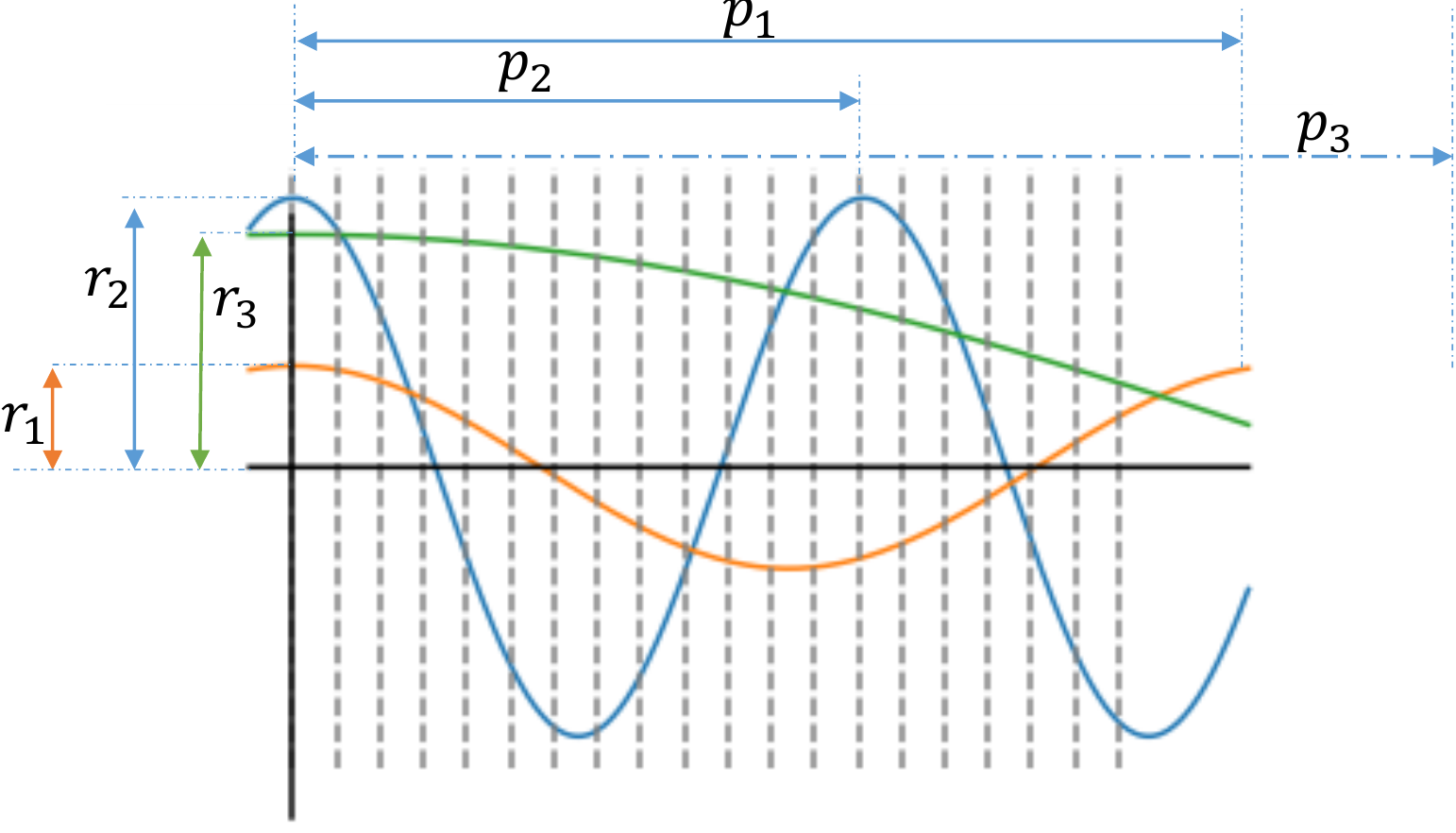}
    \end{minipage}\hfill
  \begin{minipage}[c]{0.45\textwidth}
    \caption{ \small
    3-dimensional complex embedding for a single word in different positions. The three wave functions (setting the initial phases as zero) show the real part of the embedding; the imaginary part has a $\frac{\pi}{2}$ phase difference and shows the same curves with its real-valued counterpart. The x-axis denotes the absolute position of a word and the y-axis denotes the value of each element in its word vector. Colours mark different dimensions of the embedding. The three cross points between the functions and each vertical line (corresponding to a specific position $pos$) represent the embedding for this word in the $pos$-th position.} \label{fig:position}
    \end{minipage}
\end{figure}

We \textbf{contribute} (i) a novel paradigm that extends word vectors as continuous functions over changing variables like word position, and (ii) a general word embedding that models word order in a mathematically-sound manner. 
We integrate our complex word embeddings in state-of-the-art (SOTA) NN architectures (CNN, RNN, Transformer and experimentally find that it yields gains over both classical word embeddings and position-enriched word embeddings in text classification, machine translation and language modeling. Note that this is the first work in NLP to link imaginary numbers in complex-valued representation to concrete meanings (i.e., word order).

\section{Modelling Word Order in Embedding Space}
\label{sec:function}


 A Word Embedding (WE) 
generally defines a map $f_{we}: \mathbb{N} \rightarrow \mathbb{R}^{D} $ from a discrete \textbf{word} index to a $D$-dimensional real-valued vector and $\mathbb{N} = \{0,1,2,\ldots\}$. Similarly, a Position Embedding (PE) \citep{gehring2017convolutional,vaswani2017attention} defines another map $f_{pe}: \mathbb{N} \rightarrow \mathbb{R}^{D}$ from a discrete \textbf{position} index to a vector. The final embedding for word $w_j$ ($w_j \in \mathbb{W}$ with index $j$ in a given vocabulary $\mathbb{W}$) in the $pos$-th position in a sentence is usually constructed by  
the sum 
 \begin{equation}
 \label{eq:final_embedding}
 f(j,pos) = f_{we}(j) + f_{pe}(pos),
\end{equation}
and $~f(j,pos) \in \mathbb{R}^{D}$. Since both the word embedding map $f_w$ and the position embedding map $f_p$ only take integer values as word indexes or position indexes, embedding vectors for individual words or positions are trained independently. The independent training for each word vector is reasonable, since a word index is based on the order of a given arbitrary vocabulary and does not capture any specific sequential relationship with its neighboring words. 
However, the position index captures an ordered relationship, for instance adjacency or precedence, leading to the problem that position embeddings in individual positions \citep{gehring2017convolutional} are independent of each other; the ordered relationship between positions is not modelled. We refer to this as the \textit{position independence problem}.
This problem becomes more crucial when position embeddings are used in position-insensitive NNs, e.g., FastText \citep{mikolov2013efficient}, ConvSeq \citep{gehring2017convolutional} and Transformer \citep{vaswani2017attention}, because it is hard for such position-insensitive NNs with vanilla position embeddings \citep{gehring2017convolutional} to infer that $w_{j_1}$ in the $pos$-th position is close to $w_{j_2}$ in the $pos+1$-th position, or that $w_{j_1}$ precedes $w_{j_2}$; instead, it is only inferred that $w_{j_1}$ and $w_{j_2}$ are in different positions, while the relative distance between them is almost unknown. Thus vanilla position embeddings \citep{gehring2017convolutional} cannot fully capture the sequential aspect of language. 

Next, we first introduce the necessary properties to model word order in embeddings, and then give a unique solution to meet such properties.

\subsection {Extending vectors to functions}
\label{sec:encoding_relative_position}

In the general definition in Eq.~\ref{eq:final_embedding}, each dimension of the position embedding is obtained based on the discrete position indexes $\{0,1,2,...,\textrm{pos},...\}$. This makes it difficult to model the ordered relationship between the positions. One solution to this problem is 
to build continuous functions over a variable (i.e., position index) to represent a specific word in an individual dimension.  
Formally, we define a general embedding as 
 \begin{equation}
 \label{eq:final_embedding_function}
 f(j,\textrm{pos}) = \vg_{j}(\textrm{pos}) \in \mathbb{R}^{D},
\end{equation}
where $\vg_{j}$ is short for $\vg_{we}(j) \in ({\mathcal{F}})^D$, indicating $D$  functions over position index $\textrm{pos}$,  and $g_{we}(\cdot): \mathbb{N} \rightarrow  (\mathcal{F}) ^D $ is a mapping from a word index to  $D$ functions. 
By expanding the $D$ dimension of $\vg_j$, a word $w_j$ in the $pos$-th position can be represented as a $D$-dimensional vector as shown in \begin{equation}
    [g_{j,1}(\textrm{pos}),g_{j,2}(\textrm{pos}),...,g_{j,D}(\textrm{pos}) ] \in \mathbb{R}^D,
\end{equation} 
in which $ \forall g_{j,d}(\cdot) \in \mathcal{F}:\mathbb{N} \rightarrow \mathbb{R}, d\in \{1,2,...,D\}$ is a  function over the position index $pos$. To move the word $w_j$ from the current position $pos$ to another one $pos'$, it needs only replace the variable $pos$ to $pos'$ without changing $\vg_j$.

Functions for words, especially continuous functions, are expected to capture smooth transformation from a position to its adjacent position therefore modeling word order. 
The \emph{position-independent} position embedding \citep{gehring2017convolutional} can be considered as a special case of our definition when it only takes  independent values for individual positions in  the embedding function. 

\subsection{Properties for the Functions to capture word order}
\label{sec:properties}



Relative distance is hard to compute because position indices are not visible in NNs after vector embedding (discrete position indices are necessarily embedded as vectors like words to be back-propagated with the gradient). Hence, we claim that the modeling of relative distance in NNs should be \emph{position-free}: absolute position indices cannot be directly accessed in intermediate layers. Instead of processing \emph{position-free} operations in NNs to capture relative distance between words, prior work \citep{shaw2018self,dai2019transformer} first calculates the relative distance between words, and then feeds the relative distance as an additional feature or as embeddings/weights to NNs, instead of directly feeding with the raw position indices. 




Assume that words are embedded into $\mathbb{R}^D$,
and let, for $1 \leq d \leq D$, the function $g_{j,d} : \mathbb{N} \rightarrow \mathbb{R}$ be the embedding function giving the $d$-th coordinate of the representation of word $w_j$ (i.e., $g_{j,d}(\textrm{pos})$ is the $d$-th coordinate of the embedding of $w_j$ if it occurs at position $\textrm{pos}$. In the following, we simply write $g$ instead of $g_{j,d}$ when there is no risk of confusion.
Ideally, one would like there to exist a function $\transformarg{n} : \mathbb{R} \rightarrow \mathbb{R}$ that transforms the embedding of any word at some position $\textrm{pos}$ to the embedding of a word at position $\textrm{pos}+n$ such that $\transformarg{n}$ is only dependent on the embedded value itself, but \emph{independent} of the position $\textrm{pos}$, that is $\forall \textrm{pos}: g(\textrm{pos}+n) =\transformarg{n}(g(\textrm{pos}))$.  

Prior work in NLP \citep{li2019cnm}, Information Retrieval \citep{van2004geometry} and Machine Learning \citep{trabelsi2017deep} has shown the usefulness of complex numbers as richer representations. Complex word embeddings \citep{wang2019semantic,li2019cnm,li2018quantum} have been used to model language. To investigate the potential of complex-valued representation, we extend the target domains of $g(\cdot)$ from $\mathbb{R}^D$ to $\mathbb{C}^D$ without losing generality, since real-valued numbers are specific complex numbers with their imaginary part being zero. This property regarding ``position-free offset transformation'' in complex-valued domains is formally defined in Property~\ref{prop:1} below.

\begin{property}\label{prop:1}
\textbf{Position-free offset transformation}: 
 An embedding function $g : \mathbb{N} \rightarrow \mathbb{C}$
is said to be a \emph{position-free offset transformation}
if there exists a function $\transformname : \mathbb{N} \times \mathbb{C} \rightarrow \mathbb{C}$ (called the \emph{witness}) such that for all $n \geq 1$, the function
$\transformarg{n}(\cdot) = \transformname(n,\cdot)$
satisfies $\forall \textrm{pos} \in \mathbb{N} : g(\textrm{pos} + n) = \transformarg{n}(g(\textrm{pos}))$. A position-free offset transformation $g$ is said to be \emph{linearly witnessed} if there is a function $w : \mathbb{N} \rightarrow \mathbb{C}$ such that $g$ has a witness $\transformname$ satisfying, for all $n$, $\transformname(n,\textrm{pos}) = 
\transformarg{n}(\textrm{pos}) = w(n)$ (i.e., each $\transformarg{n}$
is a linear function).
\end{property}

Additionally, a \emph{boundedness} property is necessary to ensure that the position embedding can deal with text of any length ($pos$ could be large in a long document).
\begin{property}\label{prop:2}
\textbf{Boundedness}: The function over  the variable position should be bounded, i.e.  $  \exists \delta \in \mathbb{R}^{+}, \forall \textrm{pos} \in \mathbb{N}, \vert g(\textrm{pos}) \vert \leq \delta$.  
\end{property}

Formally, we prove the following claim that there is a unique solution that meets Properties \ref{prop:1} and \ref{prop:2}
under the condition that the embedding function is linearly witnessed. We use linear functions because they are well-understood and simple with a single  floating-point operation in NNs.


\begin{thm}\label{the:uniqueness}
A function $g : \mathbb{N} \rightarrow \mathbb{C}$ is a bounded and linearly witnessed position-free offset transformation if{f} it is on the form
    $g(\textrm{pos}) = z_2  z_1^{\textrm{pos}}$ 
for $z_1,z_2 \in \mathbb{C}$ with $\vert z_1 \vert \leq 1$.

\end{thm}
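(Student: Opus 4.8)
The plan is to prove the biconditional in the two directions, with the reverse (necessity) direction carrying essentially all of the content. For the easy direction (sufficiency), I would take $g(\textrm{pos}) = z_2 z_1^{\textrm{pos}}$ with $|z_1| \le 1$ and check the two properties directly. Boundedness is immediate: $|g(\textrm{pos})| = |z_2|\,|z_1|^{\textrm{pos}} \le |z_2|$, so $\delta = |z_2|$ witnesses Property~\ref{prop:2}. For the linearly witnessed offset property, the identity $g(\textrm{pos}+n) = z_2 z_1^{\textrm{pos}+n} = z_1^{n}\, g(\textrm{pos})$ shows that the choice $w(n) = z_1^{n}$, giving the linear map $\transformarg{n}(x) = w(n)\,x$, is a valid linear witness satisfying Property~\ref{prop:1}.

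For the hard direction (necessity), the key observation is that the whole family of offset constraints collapses to a single one-step recurrence. Being a linearly witnessed position-free offset transformation means there is $w : \mathbb{N} \rightarrow \mathbb{C}$ with $g(\textrm{pos}+n) = w(n)\,g(\textrm{pos})$ for every $\textrm{pos} \in \mathbb{N}$ and every $n \ge 1$. Specializing to $n=1$ gives $g(\textrm{pos}+1) = w(1)\,g(\textrm{pos})$, and a trivial induction on $\textrm{pos}$ yields $g(\textrm{pos}) = w(1)^{\textrm{pos}}\,g(0)$. Hence setting $z_1 = w(1)$ and $z_2 = g(0)$ already produces the claimed form, and no use of the constraints for $n \ge 2$ is needed (they are automatically consistent, forcing $w(n) = w(1)^{n}$). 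It then remains only to extract $|z_1| \le 1$ from boundedness: assuming $z_2 = g(0) \neq 0$, we have $|g(\textrm{pos})| = |z_2|\,|z_1|^{\textrm{pos}}$, and since this is bounded by some $\delta$ for all $\textrm{pos}$, the case $|z_1| > 1$ is impossible (as $|z_1|^{\textrm{pos}} \to \infty$), so $|z_1| \le 1$.

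The one point requiring care, and the only genuine obstacle, is the degenerate case $g(0) = 0$: here the recurrence forces $g \equiv 0$, boundedness places no constraint on $w(1)$, and one simply records that $g$ is of the required form by taking $z_2 = 0$ together with any $z_1$ satisfying $|z_1| \le 1$ (e.g.\ $z_1 = 1$). I would also be careful to fix the intended reading of ``linearly witnessed'', namely that each $\transformarg{n}$ is the homogeneous linear map $x \mapsto w(n)\,x$ acting on the embedded value $g(\textrm{pos})$; the affine reading $x \mapsto w(n)\,x + b(n)$ would admit a strictly larger solution family (shifted geometric sequences with a nonzero fixed point) and thereby break the uniqueness asserted by the claim.
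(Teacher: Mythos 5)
Your proof is correct and follows essentially the same route as the paper's: verify the closed form directly for sufficiency, and for necessity reduce the offset constraints to the one-step recurrence $g(\textrm{pos}+1)=w(1)g(\textrm{pos})$ and induct, then use boundedness to force $|z_1|\le 1$. The one substantive difference is that the paper first derives the multiplicativity relation $w(n_1+n_2)=w(n_1)w(n_2)$ by cancelling $g(\textrm{pos})$ from both sides of $w(n_1)w(n_2)g(\textrm{pos})=w(n_1+n_2)g(\textrm{pos})$, which tacitly assumes $g(\textrm{pos})\neq 0$; you bypass this entirely by working only with $n=1$, and you explicitly dispose of the degenerate case $g(0)=0$ (where $g\equiv 0$, $w(1)$ is unconstrained by boundedness, and one simply takes $z_2=0$), which the paper's argument silently skips. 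Your closing remark pinning down the homogeneous reading $\transformarg{n}(x)=w(n)x$ of ``linearly witnessed'' is also the correct interpretation of the paper's Property~\ref{prop:1} and is needed for the claimed uniqueness; in short, your write-up is a slightly more careful version of the same proof.
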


\begin{proof}
Assume that $g$ is a bounded and linearly witnessed position-free offset transformation. Then, by linear witnessing, we have for all $\textrm{pos},n_1,n_2 \in \mathbb{N}$: 
\begin{align*}
w(n_1) w(n_2) g(\textrm{pos}) &= w(n_2)g(\textrm{pos} + n_1) =
g(\textrm{pos} + n_1 + n_2) \\
&= \transformarg{n_1+n_2}(g(\textrm{pos})) = w(n_1+n_2)g(\textrm{pos})
\end{align*}
\noindent whence $w(n_1 + n_2) = w(n_1)w(n_2)$. 
Write $w(1) = z_1$ and $g(0) = z_2$.
As $n_1,n_2 \in \mathbb{N}$ were arbitrary, we have
$w(n) = (w(1))^n = z_1^n$ for all $n \in \mathbb{N}$.
But then $g(\textrm{pos} + n) = w(n) g(\textrm{pos}) = z_1^n g(\textrm{pos})$. Furthermore, observe that for $\textrm{pos} \geq 1$,
we have $g(\textrm{pos}) = g(1 + \textrm{pos}-1) = w(\textrm{pos})g(0) =z_1^{\textrm{pos}} z_2 = z_2 z_1^{\textrm{pos}}$. For $\textrm{pos}=0$, $g(0) = z_2 = z_2 z_1^0$, whence $g(\textrm{pos}) = z_2 z_1^{\textrm{pos}}$,
as desired. Observe that if $\vert z_1 \vert > 1$, then
$g(\textrm{pos})$ is unbounded, whence we have $\vert z_1 \vert \leq 1$.
Conversely, assume that $g$ is on the form
$g(\textrm{pos}) = z_2 z_1^{\textrm{pos}}$ with
$\vert z_1 \vert \leq 1$. Then, $\vert g(\textrm{pos})\vert \leq \vert z_2 z_1^{\textrm{pos}} \vert \leq \vert z_2 \vert\vert z_1^{\textrm{pos}} \vert \leq \vert z_2 \vert$,
whence $g$ is bounded. Define, for each $n \in \mathbb{N}$,
$w(n) = z_1^n$ and $\transformarg{n}(\textrm{pos}) = w(n)\textrm{pos}$.
Then, for all $\textrm{pos}, n \in \mathbb{N}$, 
$$
g(\textrm{pos} + n) = z_2 z_1^{\textrm{pos} + n}
= z_2 z_1^{\textrm{pos}}z_1^{n} = g(\textrm{pos})z_1^n
= \transformarg{n}(g(\textrm{pos}))
$$
\noindent showing that $g$ is a linearly witnessed
position-free offset transformation.
\end{proof}

For any $z \in \mathbb{C}$, we may write $z= re^{i\theta} = r(\cos \theta + i\sin \theta)$. 
Thus, for the general form of the embedding $g$ from Theorem \ref{the:uniqueness}, we have:
%
\begin{equation}
\label{eq:g_new}
    g(\textrm{pos}) =z_2 z_1^{\textrm{pos}} = r_2e^{i\theta_2} (r_1e^{i\theta_1})^{\textrm{pos}} 
           =r_2 r_1^{\textrm{pos}} e^{i(\theta_2+\theta_1 \textrm{pos})}  
            \;\textrm{subject to }  \vert r_1 \vert \leq 1
\end{equation}
In implementations, the above definition of $g$ will lead to an optimization problem due to the constraint $\vert r_1 \vert \leq 1$. 
A natural and simple way to avoid this is to fix $r_1 = 1$; note that $\vert e^{ix} \vert \equiv 1$, thus automatically satisfying the constraint, in contrast to a real-valued embedding where one would need to explicitly devise functions satisfying the constraint.  Finally, Eq. \ref{eq:g_new} can be written in the simplified form: 
    $g(\textrm{pos}) = r e^{i(  \omega  \textrm{pos}+\theta)}$.
Thus, one can think of $g$ as embedding positions counterclockwise on a complex circle of radius $r$ with a fixed period ($r$ is the amplitude term, $\theta$ is the initial phase term, $ \frac{ \omega}{2\pi}$ is the frequency, and $\frac{2\pi}{\omega}$ is the period term).

\subsection{Complex-valued word Embedding}
We now define our complex-valued word embedding $g$ as a map 
taking a word index $j$ and position word index $\textrm{pos}$ 
to $\mathbb{C}^D$. For a word $w_j$ in position $\textrm{pos}$, 
our \textbf{general complex-valued embedding} is defined as $ f(j,\textrm{pos}) = {\vg_j}(\textrm{pos}) = \vr_j e^{ i (\boldsymbol{\omega_j} \textrm{pos}+ \vtheta_j)}$. Therefore,  $f(j,\textrm{pos})$ is defined as: 
\begin{equation}~\label{eq:general_pos_embedding}
     [r_{j,1} e^{ i(\omega_{j,1}  \textrm{pos} + \theta_{j,1})},...,r_{j,2} e^{ i( \omega_{j,2}  \textrm{pos}+ \theta_{j,2})}, \cdots, r_{j,D} e^{ i (\omega_{j,D}  \textrm{pos}+ \theta_{j,D})}]
\end{equation}
Note that each coordinate $d$ ($1 \leq d \leq D$) has a separate
amplitude $r_{j,d}$, period $p_{j,d}=\frac{2\pi}{\omega_{j,d}}$, and  initial phase $\theta_{j,d}$. In Fig. \ref{fig:position} each dimension is represented as a wave which is parameterized by an amplitude, a period/frequency, and an initial phase. 
The trainable parameters of the embedding are the amplitudes vector $\vr_j = [r_{j,1}, ...,r_{j,D}]$,  the period/frequency related weights $ \boldsymbol{\omega}_j = [\omega_{j,1},..., \omega_{j,D}]$, and the initial phase vector $\boldsymbol{\theta}_j = [\theta_{j,1},..., \theta_{j,D}]$. Note that the mean values of $f(j,\cdot)$ over all positions are linearly dependent on the amplitude. 
Observe that the period/frequency determines to what degree the word is sensitive to the position. With an extremely long period (i.e., $\omega_j$ very small), the complex-valued embedding is approximately constant for all possible values of $\textrm{pos}$, and hence approximates a standard word embedding. Conversely, if the period is short, the embedding will be highly sensitive to the position argument.

In our embedding, the mean vectors of $f(j,\cdot)$ taken over all positions are linearly correlated to the amplitude embedding $\vr_{j} = [r_{j,1},...,r_{j,K}]$ with a  coefficient $\frac{2}{\pi}$.
The amplitude $r_{j,d}$ of our embedding depends only on the word $w_j$ (and coordinate $d$), not on the position of the word, whence one can think of the vector $g_{pe}(j,\textrm{pos}) = [ e^{ i(\omega_{j,1}  \textrm{pos} + \theta_{j,1})}, \cdots, e^{ i( \omega_{j,D}  \textrm{pos} + \theta_{j,D})}]$ as a ``purely'' positional embedding.
Consequently, our complex embedding can be considered an element-wise multiplication  between  the word embedding $g_{we}(j)=[r_{j,1},...,r_{j,K}]$ and position embedding $g_{pe} $.
\begin{equation}
\label{eq:ecve2}
    f(j,\textrm{pos}) = g_{we}(j) \odot g_{pe}(j,\textrm{pos})
\end{equation}


Prior work~\citep{gehring2017convolutional,vaswani2017attention} uses  mean-weight addition between word embeddings $f_{we}$ and position embeddings $f_{pe}$ (all words share the weights). In our work, word embeddings and position embeddings  are decoupled to some extent by element-wise multiplication and therefore the frequency/period terms (related to $\omega_{j,d}$)  can adaptively adjust the importance between semantic and position information for each word and each dimension.
In particular, with higher frequency (i.e., large $\omega_{j,d}$), the final embedding will change dramatically with the changing positions, while it can be fixed for any positions with an extremely-small frequency (i.e., small $\omega_{j,d}$). Interestingly, the well-known position embedding in Transformer \citep{vaswani2017attention} can be seen as a degraded version of one of our specific  complex word embeddings (see the Appendix \ref{sec:attention_complex}).

\section{Experimental Evaluation}

We evaluate our embeddings in text classification, machine translation and language modeling. 

\subsection{Text Classification}

\paragraph{Experimental Setup.} We use six popular text classification datasets: 
CR, MPQA, SUBJ, MR,
SST, and TREC
(see Tab.\ \ref{tab:1}). We use accuracy as evaluation measure based on fixed train/dev/test splits or cross validation, as per prior work. 
We use Fasttext \citep{joulin2016bag}, CNN \citep{Kim2014Convolutional}, LSTM and Transformer \citep{vaswani2017attention} as NN baselines\footnote{Graph convolutional networks (GCNs) \citep{beck2018graph,sahu2019inter} also encode positional information. We do not compare against them because they encode positional information inherently as part of the model, which makes redundant any additional encoding of positional information at the embedding level.}. We use each of them: (1) without positional information; (2) 
with \textbf{Vanilla Position Embeddings (PE)} (randomly initialized and updated during training using the sum between word and position vectors \citep{gehring2017convolutional};
(3) with \textbf{Trigonometric Position Embeddings (TPE)} (defining position embeddings as trigonometric functions as per Eq. \ref{eq:position_embedding}); (4) with 
\textbf{Complex-vanilla} word embeddings (where the amplitude embedding is initialized by the pre-trained word vectors, and the phrase embedding is randomly initialized in a range from $-\pi$ to $\pi$ without considering word order \citep{wang2019semantic}); and (5) with our order-aware complex-valued word embeddings, \textbf{Complex-order} (which encode position in the phase parts, train the periods, and where the amplitude embedding is also initialized by pretrained word vectors). For more details on the complex-valued extensions of NNs, see 
App. \ref{sec:general_nn} and App. \ref{sec:code}.


\begin{table}[]
  \begin{center}
    \small
    \begin{tabular}{l|lllll}
      \hline
      Dataset & train & test &  vocab. & task & Classes\\ \hline
      CR~\citep{hu_mining_2004} & 4K& CV  & 6K & product reviews & 2 \\
      MPQA~\citep{wiebe_annotating_2005} & 11k & CV  &6K & opinion polarity & 2\\ 
      SUBJ~\citep{pang_seeing_2005} & 10k &CV  &21k& subjectivity & 2 \\
      MR~\citep{pang_seeing_2005} & 11.9k &CV  &20k& movie reviews & 2  \\ 
      SST~\citep{socher_recursive_2013} & 67k &2.2k  &18k& movie reviews & 2 \\ 
      TREC~\citep{li_learning_2002} & 5.4k &0.5k  &10k& Question & 6\\ 
      \hline
    \end{tabular}
  \end{center}
  \caption{Dataset Statistics. CV means 10-fold cross validation. The last 2 datasets come with train/dev/test splits. 
  }
  \label{tab:1}
  \vspace{-10pt}
\end{table}

Our embedding generally has $3\times D \times \vert \mathbb{W} \vert$ parameters with D-dimensional word vectors and $\vert \mathbb{W} \vert$ words, while previous work \citep{mikolov2013efficient,pennington2014glove} usually employs only $D \times \vert \mathbb{W} \vert$ parameters for embedding lookup tables. To increase efficiency and facilitate fair comparison with previous work 
we set initial phases $\vtheta_j=[\theta_{j,1}, ...,\theta_{j,D}]$ to a shared constant value (such as zero). Furthermore, the period vectors $\omega_{j,d}$ depend on word index $j$ with length $\vert \mathbb{W} \vert$ and the coordinate index $d$ with length $D$. To decrease the number of parameters, one can either use a \emph{word-sharing} scheme (i.e., $\omega_{j,d} = \omega_{\cdot,d}$), or a \emph{dimension-sharing} scheme ($\omega_{j,d} = \omega_{j,\cdot}$), leading to $ \vert \mathbb{W} \vert *D + \vert \mathbb{W} \vert $ and $ \vert \mathbb{W} \vert *D + D $ parameters in total for the embedding layer.

%
We search the hyper parameters from a parameter pool, with batch size in $\left\{ {32,64,128} \right\}$, learning rate in $\left\{ {0.001,0.0001,0.00001} \right\}$, L2-regularization rate in $\left\{ {0,0.001,0.0001} \right\}$, and number of hidden layer units in $\left\{ {120,128} \right\}$. We use pre-trained 300-dimensional vectors from word2vec \citep{Mikolov2013Distributed} in all models except for Transformers. The models with trainable trigonometric position embedding produce nearly identical results compared to the non-trainable version, therefore we report the result of fixed position embeddings as per \cite{vaswani2017attention}. We adopt narrow convolution and max pooling in CNN, with number of filters in $\left\{ {64,128} \right\}$, and size of filters in $\left\{ {3,4,5} \right\}$. In all Transformer models, we only use the encoder layer to extract feature information, where the layer is 1, dimension of word and inner hidden are 256 and 512 respectively, and  head number is 8.

\begin{table}[t]\small
  \caption{Text classification accuracy without position embeddings, with random position embeddings (PE), with trigonometric position embeddings (TPE), with complex-valued NNs without position embeddings (complex-vanilla), and with our complex-order embeddings. Superscripts $\S$, $\dagger$, $\ddagger$ and $^{\ast}$ mean a significant improvement over a baseline without position embeddings $^{\S}$, PE$^{\dagger}$,  TPE$^{\ddagger}$ and Complex-vanilla $^{\ast}$ using Wilcoxon's signed-rank test p$<$0.05.  } 
  \label{tab:result_classification_sub}
  \centering
  \setlength{\tabcolsep}{1.3mm}{
  \begin{tabular}{llllllllll}
    \toprule
    \multirow{1}{*}{Method}& \multicolumn{1}{l}{MR} & \multicolumn{1}{l}{SUBJ}& \multicolumn{1}{l}{CR}& \multicolumn{1}{l}{MPQA}& \multicolumn{1}{l}{SST}& \multicolumn{1}{l}{TREC}\\
    \midrule
    Fasttext   & 0.765& 0.916& 0.789&0.874 &0.788 & 0.874\\
    Fasttext-PE & 0.774& 0.922& 0.789&0.882 &0.791 & 0.874\\  %
    Fasttext-TPE & 0.776& 0.921& 0.796&0.884 &0.792 & 0.88\\
    Fasttext-Complex-vanilla &0.773 & 0.918& 0.79 &0.867 &0.803 & 0.872\\
    \textbf{Fasttext-Complex-order} & {\bfseries ${\text{0}}{\text{.78}}{{\text{7}}^{\S\dagger\ddagger\ast} }$}& {\bfseries ${\text{0}}{\text{.92}}{{\text{9}}^{\S\dagger\ddagger\ast} }$}& {\bfseries ${\text{0}}{\text{.800}^{\S\dagger\ddagger\ast} }$}&{\bfseries ${\text{0}}{\text{.88}}{{\text{9}}^{\S\dagger\ddagger\ast} }$}&{\bfseries ${\text{0}}{\text{.80}}{{\text{9}}^{\S\dagger\ddagger\ast} }$}& {\bfseries ${\text{0}}{\text{.89}}{{\text{2}}^{\S\dagger\ddagger\ast} }$}\\
    \hline
    LSTM  &0.775& 0.896& 0.813 &0.887 &0.807 & 0.858\\  
    LSTM-PE & 0.778& 0.915& 0.822 &0.889 &0.811 & 0.858\\
    LSTM-TPE & 0.776&  0.912& 0.814 &0.888 &0.813 & 0.865\\
    LSTM-Complex-vanilla &0.765 &0.907 & 0.810  &0.823 &0.784 & 0.784\\
    \textbf{LSTM-Complex-order} & {\bfseries ${\text{0}}{\text{.79}}{{\text{0}}^{\S\dagger\ddagger\ast} }$}& {\bfseries ${\text{0}}{\text{.92}}{{\text{6}}^{\S\dagger\ddagger\ast} }$} & {\bfseries ${\text{0}}{\text{.82}}{{\text{8}}^{\S\dagger\ddagger\ast} }$} &{\bfseries ${\text{0}}{\text{.89}}{{\text{7}}^{\S\dagger\ddagger\ast} }$}&{\bfseries ${\text{0}}{\text{.81}}{{\text{9}}^{\S\dagger\ddagger\ast} }$}& {\bfseries ${\text{0}}{\text{.86}}{{\text{9}}^{\S\dagger\ddagger\ast} }$}\\
    \hline
    CNN  &0.809& 0.928& 0.830 &0.894 & 0.856 & 0.898 \\  
    CNN-PE & 0.816& 0.938& 0.831 &0.897 & 0.856 & 0.890\\
    CNN-TPE & 0.815&  0.938 & 0.836 &0.896 & 0.838 & 0.918\\
    CNN-Complex-vanilla &0.811 &0.937&0.825 &0.878 & 0.823 & 0.900  \\
    \textbf{CNN-Complex-order} & {\bfseries ${\text{0}}{\text{.82}}{{\text{5}}^{\S\dagger\ddagger\ast} }$}& {\bfseries ${\text{0}}{\text{.95}}{{\text{1}}^{\S\dagger\ddagger\ast} }$} &{\bfseries ${\text{0}}{\text{.85}}{{\text{2}}^{\S\dagger\ddagger\ast} }$}&{\bfseries ${\text{0}}{\text{.90}}{{\text{6}}^{\S\dagger\ddagger\ast} }$}&{\bfseries ${\text{0}}{\text{.864}^{\S\dagger\ddagger\ast} }$}&{\bfseries ${\text{0}}{\text{.93}}{{\text{9}}^{\S\dagger\ddagger\ast} }$}\\
    \hline
    Transformer w/o  position embedding &0.669& 0.847& 0.735&0.716& 0.736 & 0.802\\
    Transformer-PE  &0.737&0.859&0.751&0.722& 0.753 & 0.820\\
    Transformer-TPE  \citep{vaswani2017attention} &0.731&0.863& 0.762 &0.723 & 0.761 &0.834 \\
    Transformer-Complex-vanilla &0.715& 0.848&0.753&0.786& 0.742 & 0.856\\
    \textbf{Transformer-Complex-order} &{\bfseries ${\text{0}}{\text{.74}}{{\text{6}}^{\S\dagger\ddagger\ast} }$}& {\bfseries ${\text{0}}{\text{.89}}{{\text{5}}^{\S\dagger\ddagger\ast} }$}& {\bfseries ${\text{0}}{\text{.80}}{{\text{6}}^{\S\dagger\ddagger\ast} }$} &{\bfseries ${\text{0}}{\text{.86}}{{\text{3}}^{\S\dagger\ddagger\ast} }$} & {\bfseries ${\text{0}}{\text{.81}}{{\text{3}}^{\S\dagger\ddagger\ast} }$} & {\bfseries ${\text{0}}{\text{.89}}{{\text{6}}^{\S\dagger\ddagger\ast} }$} \\
    \bottomrule
  \end{tabular}}
\end{table}

\begin{table}[t]\small
  \caption{Text classification accuracy. \reported ~ means that scores are reported from other papers. 
  }.
  \label{tab:result_classification}
  \centering
  \setlength{\tabcolsep}{1.3mm}{
  \begin{tabular}{llllllllll}
    \toprule
    \multirow{1}{*}{Method}& \multicolumn{1}{l}{MR} & \multicolumn{1}{l}{SUBJ}& \multicolumn{1}{l}{CR}& \multicolumn{1}{l}{MPQA}& \multicolumn{1}{l}{SST}& \multicolumn{1}{l}{TREC}\\
    \midrule
    Word2vec Bow \citep{conneau2017supervised}  \reported     &0.777 &0.909& 0.798  &0.883 &0.797 & 0.836 \\
    Sent2Vec  \citep{gupta2018unsupervised}   \reported       &0.763  &0.912& 0.791  &0.872 &0.802 & 0.858\\
    QuickThoughts   \citep{logeswaran2018efficient} \reported           &0.824&0.948 & 0.860  &0.902  &- & 0.928\\
    InferSent  \citep{conneau2017supervised}  \reported  &0.811&0.924 & \textbf{0.863}   &0.902  &0.846 & 0.882\\
    QPDN    \citep{wang2019semantic}   \reported         &0.801&0.927 & 0.810   &0.870  &0.839 & 0.882\\
        
    \bottomrule
  \end{tabular}}
\end{table}

\paragraph{Results.}
\label{3-2} The results are shown in Tab. \ref{tab:result_classification_sub}. Our complex-order embeddings outperform all other variations at all times. This gain in effectiveness comes at a negligible (or non-existent) cost in efficiency (it varies per NN architecture -- see Fig. \ref{fig:5}). CNNs are the best performing NN as expected following \cite{abs-1803-01271}. Tranformer NNs benefit the most from our complex-order embeddings, most likely because they are our weakest baseline. 
To contextualise these results, Tab. \ref{tab:result_classification} shows classification accuracy of five typical approaches on the same datasets (as reported in the original papers). Our complex-order embeddings outperform all methods, except for the CR dataset, where InferSent is marginally better. Overall, our approach is on a par with the SOTA in embeddings.


\begin{figure}[] 
\centering 
\includegraphics[height=3.5cm]{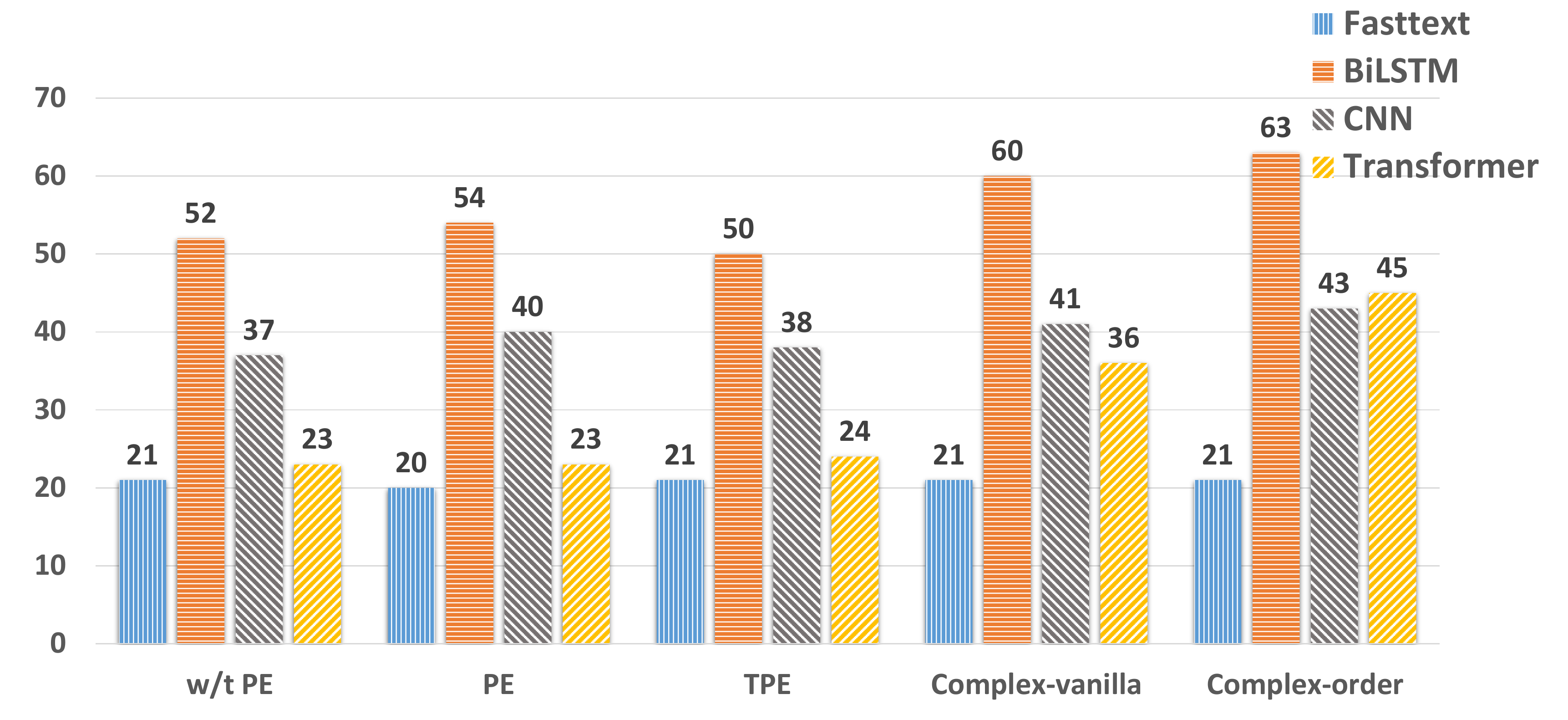}
\caption{Computation time (seconds) per epoch in Tensorflow on TITAN X GPU.} 
\label{fig:5}
\end{figure}


\begin{table}[h]
\small
  \caption{Ablation test for Transformer, showing the effect of (i) the definition of embedding layer($f_d (j,\textrm{pos})$), and (ii) whether the real-part and imaginary transition share the weights, i.e., $ \Re(W^{Q/K/V}) = \Im(W^{Q/K/V})$.}
  \label{tab:result_ablation}
  \centering
  \setlength{\tabcolsep}{1.3mm}{
  \begin{tabular}{llllll}
    \toprule
    \multirow{1}{*}{Method} & \multicolumn{2}{c}{Setting} &Params &Accuracy & $\Delta$  \\ &$f_d (j,\textrm{pos})$& share in $W^{Q/K/V}$ &  &    \\
    \midrule
    Transformer-complex-order & $ r_{j,d} e^{ i (\omega_{j,d}  \textrm{pos})}$  &$\times$ & 8.33M & {\bfseries 0.813} & -\\
    \hline
     adding initial phases & $r_{j,d} e^{ i (\omega_{j,d}  \textrm{pos}+ \theta_{j,d})}$  &$\times$ &11.89M & 0.785 & -0.028\\
    \hline
    dimension-sharing period schema &$ r_{j,d} e^{ i 
    \omega_{j,\cdot}  \textrm{pos}}$ &$\times$ &5.82M& 0.797& -0.016\\
    word-sharing  period schema&$ r_{j,d} e^{ i \omega_{\cdot,d}  \textrm{pos}}$ &$\times$ &5.81M& 0.805 & -0.008\\
  
    dimension-sharing amplitude schema & $ r_{j,\cdot} e^{ i \omega_{j,d}  \textrm{pos}}$  & $\times$&5.82M&0.798& -0.015\\
    word-sharing  amplitude schema& $ r_{\cdot,d} e^{ i \omega_{j,d}  \textrm{pos}}$  & $\times$&5.81M&0.804& -0.009\\
    w/t encoding positions (complex-vanilla) &$r_{j,d} e^{ i \omega_{j,d}}$&$\times$  &9.38M&0.764& -0.049\\
    \hline
    dimension-sharing period schema & $ r_{j,d} e^{ i \omega_{j,\cdot}  \textrm{pos}}$  & \checkmark&4.77M &0.794& -0.019\\
    word-sharing  period schema& $ r_{j,d} e^{ i \omega_{\cdot,d}  \textrm{pos}}$  & \checkmark&4.76M& 0.797 & -0.016\\
    
    dimension-sharing amplitude schema & $ r_{j,\cdot} e^{ i \omega_{j,d}  \textrm{pos}}$  & \checkmark&4.77M&0.792& -0.021\\
     word-sharing  amplitude schema& $ r_{\cdot,d} e^{ i \omega_{j,d}  \textrm{pos}}$  & \checkmark&4.76M&0.801& -0.012\\
    w/t encoding positions (complex-vanilla) & $ r_{j,d} e^{ i \omega_{j,d} }$  & \checkmark&8.33M&0.743& -0.07\\
    \hline
    vanilla Transformer \citep{vaswani2017attention} & $ WE_{j,d} + PE_d$ &- & 4.1M & 0.761 & -0.052  \\
    \bottomrule
  \end{tabular}}
\end{table}

We perform an ablation test (Tab. \ref{tab:result_ablation}) on Transformer because it is the most common NN to be used with position embeddings. 
The two period-sharing schemas (dimension-sharing and word-sharing) slightly drop performance, because fewer parameters limit the representative power. 
{\color{black}{
Adding initial phases also hurts performance, although we observed that the loss could decrease faster in early epochs compared to the setting without offset. The negative effect of initial phases may be due to periodicity, and $\boldsymbol{\omega} $ cannot be directly regularized with  L2-norm penalties. The sharing schemes slightly decrease the performance with less parameters. More details of the learned periods/frequencies (e.g. the distributions of periods/frequencies and case studies) are shown in App. \ref{sec:cases}.}}

 
Note that the word-sharing schema outperform the Vanilla Transformer, (both have a comparable number of parameters). 
If we choose $\Re(W^{Q/K/V}) = \Im(W^{Q/K/V})$, the additional parameters in the embedding layers will affect much less the whole parameter scale in the multiple-layer Transformer, since a embedding layer is only used in the first layer instead of the following Transformer layers. 


\subsection{Machine Translation}
\paragraph{Experimental Setup.} 
We use the standard WMT 2016 English-German dataset \citep{Sennrich2016Edinburgh}, whose training set consists of 29,000  sentence pairs. We use four baselines: basic Attentional encoder-decoder (AED) \citep{bahdanau2014neural}; AED with Byte-pair encoding (BPE) subword segmentation for open-vocabulary translation \citep{Sennrich2016Edinburgh}; AED with extra linguistic features (morphological, part-of-speech, and syntactic dependency labels) \citep{Sennrich2016Linguistic}; and a 6-layer Transformer. Our approach (Transformer Complex-order) uses a batch size of 64, a head of 8, 6 layers, the rate of dropout is 0.1, and the dimension of the word embedding is 512. The embedding layer does not use initial phases, i.e., following $ f(j,pos) = \vr_j e^{ i (\boldsymbol{\omega_j}  pos)}$. We evaluate MT performance with the Bilingual Evaluation Understudy (BLEU) measure.

\begin{minipage}[b]{0.55\linewidth}
\centering
\small
  \captionof{table}{ Machine translation results. \reported ~ marks scores reported from other papers.\label{tab:result_translations}}
  \setlength{\tabcolsep}{1.3mm}{
  \begin{tabular}{llllllllll}
    \toprule
    \multirow{1}{*}{Method} & \multicolumn{1}{c}{BLEU}& 
    \cline{3-7}
    \midrule
    AED \citep{bahdanau2014neural} \reported   & 26.8\\
    AED+Linguistic \citep{Sennrich2016Linguistic} \reported  & 28.4\\
    AED+BPE \citep{Sennrich2016Edinburgh} \reported   & 34.2\\
    Transformer  \citep{ma2019tensorized} \reported & ${34.5}$\\
    \hline
    Transformer complex vanilla & 34.7\\
    \textbf{Transformer Complex-order} & {\bfseries 35.8}\\
    \bottomrule
  \end{tabular}}
\end{minipage}
\begin{minipage}[b]{0.45\linewidth}
  \small
\centering
 \captionof{table}{ Language modeling results. \reported ~ marks scores reported from other papers.}
  \label{tab:result_language}
  \setlength{\tabcolsep}{1.3mm}{
  \begin{tabular}{llll}
    \toprule
    \multirow{1}{*}{Method} & \multicolumn{1}{c}{BPC}& \\
    \midrule
    BN-LSTM  \citep{cooijmans2016recurrent} \reported  & 1.36\\
    LN HM-LSTM  \citep{chung2016hierarchical} \reported & 1.29\\
    RHN  \citep{zilly2017recurrent}  \reported  &  1.27\\
    Large mLSTM  \citep{krause2016multiplicative}  \reported & 1.27\\
    Transformer XL 6L   \citep{dai2019transformer}  & 1.29\\
    \hline
     Transformer complex vanilla & 1.30\\
    \textbf{Transformer XL Complex-order 6L} & {\bfseries 1.26}\\
    \bottomrule
  \end{tabular}}
\end{minipage}

\paragraph{Results} 
Tab. \ref{tab:result_translations} shows the MT results. Our approach outperforms all baselines. Two things are worth noting: (1) Both the vanilla Transformer and our Transformer Complex-order outperform the three Attentional encoder-decoder baselines which are based on an LSTM encoder and decoder, even when AED uses additional features. (2) Our Transformer Complex-Order outperforms the Vanilla Transformer and  complex-vanilla Transformer  by $1.3$ and $1.1$ in absolute BLEU score respectively. 

\subsection{Language Modeling}

\paragraph{Experimental Setup.} 
We use the text8 \citep{mahoney2011large} dataset, consisting of English Wikipedia articles. The text is lowercased from a to z, and space. 
The dataset contains 100M characters (90M for training, 5M for dev, and 5M for testing, as per \cite{mikolov2012subword}). 
We use as baselines BN-LSTM, LN HM-LSTM RHN and Large mLSTM, which are typical recurrent NNs for language modeling in this dataset. We evaluate performance with the Bits Per Character (BPC) measure, (the lower, the better). We run the coder in \cite{dai2019transformer} with 6 layers for Transformer XL 6L; our model, named Transformer XL complex-order, directly replaces the word embedding with our proposed embedding under the same setting. We choose 6 layers due to limitations in computing resources.  
For Transformer XL Complex-order, all other parameter settings are as for Transformer XL \citep{dai2019transformer}. Our complex-order model does not use initial phases.

\paragraph{Results.} 
We see in Tab. \ref{tab:result_language} that our method outperforms all baselines. The first four baselines (BN-LSTM, LN HM-LSTM, RHN and Large mLSTM) are based on recurrent NNs and rely on different regulation methods to become applicable in multiple-layer recurrent architectures. Transformer-based architectures can easily be stacked with multiple layers due to their advantages in parallelization, however the vanilla Transformer does not outperform the multiple-layer recurrent baselines, most likely due to its limitation of 6 layers. Our Transformer XL Complex-order outperforms its vanilla counterpart  under the 6-layer setting (and also strong recurrent network baselines), demonstrating that our embedding also generalizes well in tasks with long-term dependency. With limited resources, slightly increasing the parameters in the feature layer like our proposed embedding could be more beneficial than stacking more layers with linearly increasing parameters.

\vspace{-5pt}
\section{Related Work}

Complex-valued NNs are not new \citep{georgiou1992complex,kim2003approximation,hirose2003complex}. Complex-valued weights have been used in NNs, motivated by biology \citep{reichert2013neuronal}, 
and also as signal processing in speech recognition \citep{shi2006importance}. 
More recently, \cite{Arjovsky2016Unitary} shifted RNNs into the complex domain and \cite{Wolter2018Gated} proposed a novel complex gated recurrent cell.
\cite{trabelsi2017deep} also developped a complex-valued NN 
for computer vision and audio processing.

Complex numbers have also been applied to text processing like \citep{van2004geometry,melucci2015introduction,blacoe2013quantum}. 
\citet{trouillon2016complex} adopt complex embedding for entities in  Knowledge Graph Completion to represent antisymmetric relations with Hermitian dot product. \citet{li2019cnm,wang2019semantic} extend  word embeddings  to complex-valued fashion in quantum probability driven NNs, seeing the overview in \cite{wang2019representing}. However, the physical meaning of both the complex-valued entity and word embeddings is unknown, since a complex number was considered as two real numbers in a black-box learning paradigm. Our work first links the phase in complex numbers to word position to define concrete physical meaning  in document representations.

\section{Conclusions}

We extended word vectors to word functions with a variable i.e. position, to model the smooth shift among sequential word positions and therefore implicitly capture relative distances between words. These functions are well-defined to model the relative distances, therefore we derive a general solution in complex-valued fashion. Interestingly, the position embedding in \cite{vaswani2017attention} can be considered a simplified version of our approach. 
We extend CNN, RNN and Transformer NNs to complex-valued versions to incorporate our complex embedding. Experiments on text classification, machine translation and language modeling show 
that our embeddings are more effective than vanilla position embeddings \citep{vaswani2017attention}. 



\subsubsection*{Acknowledgments}
We thank Massimo Melucci and Emanuele Di Buccio for their helpful comments, Xindian Ma for his detailed experimental suggestions.

This work is supported by the Quantum Access and Retrieval Theory (QUARTZ) project, which has received funding from the European Union`s Horizon 2020 research and innovation programme under the Marie Sk\l{}odowska-Curie grant agreement No. 721321. Peng Zhang and Donghao Zhang are  supported in part by Natural Science Foundation of China (grant No. 61772363, U1636203)

\bibliography{iclr2019_conference}
\bibliographystyle{iclr2020_conference}
\appendix
\section{Linking to the position embedding in~\citep{vaswani2017attention}  }
\label{sec:attention_complex}


\cite{vaswani2017attention} proposed a new initialization for  position embedding, resulting in comparable performance with previous one \citep{gehring2017convolutional} even without fine-tuning. The position embedding is empirically selected as 
\begin{equation}
\label{eq:position_embedding}
\begin{aligned}
PE_{2k}(\cdot,pos) &= \sin(pos/10000^{2k/d_{model}})\\
PE_{2k+1}(\cdot,pos) &= \cos(pos/10000^{2k/d_{model}}) 
\end{aligned}
\end{equation}
Where $pos$ is the position index, $2k$ and $2k+1$ is the dimension index and $d_{model}$ is the dimension size of embedding. The reason for choosing this position embedding was not well-explained and its general extension is unknown, leading to some difficulties to improve it. 

We claim that the proposed position embedding in \citep{vaswani2017attention} is a degraded version of one of our specific  complex word embedding in word-sharing schema (i.e., $\omega_{j,d} = \omega_{\cdot,d}$), in which  $p_{j,k} = 2\pi \times 10000^{2k/d_{model}}$ and  the initial phases are set as zero. In our complex-valued position embedding, let $f_{pe,k}(\cdot,pos) = e^{ i \times 10000^{2k/d_{model}}} = \cos(10000^{2k/d_{model}}pos) + i\sin(10000^{2k/d_{model}}pos) $. Note that there exists a bi-jection between $PE(\cdot,pos)$ and $f_{pe,k}(\cdot,pos)$:

\begin{equation}
\label{eq:bi-jection}
\begin{aligned}
PE_{2k}(\cdot,pos) &= \Im(f_{pe,k}(\cdot,pos)), \\
PE_{2k+1}(\cdot,pos)  &= \Re(f_{pe,k}(\cdot,pos)) \\
\end{aligned}
\end{equation}
where  $\Re$ and $\Im$ are the operations to take the real and imaginary part of a complex-valued number. Its inverse transformation is
\begin{equation}
f_{pe,k}(\cdot,pos) = PE_{2k+1}(\cdot,pos)+i PE_{2k}(\cdot,pos)
\end{equation}

In our overall embedding, each dimension $f_k(j,pos) = f_{we,k}(j) \odot  f_{pe,k}(\cdot,pos)$ in our approaches, while it is  $E_k(j,pos) =WE_k(j) +  PE_{k}(\cdot,pos)$ in \citep{gehring2017convolutional,vaswani2017attention}.
Hence the position embedding in~\citep{vaswani2017attention} is equivalent, albeit not identical, to our complex-valued position embedding with $p_{j,k} = 2\pi \times 10000^{2k/d_{model}}$. It is therefore a particular case of our complex-valued position embedding, the word-sharing schema in which all words share the same period at a certain dimension, i.e, $p_{j,k}= p_{\cdot,k}$ is irrelevant to the choice of $j$.

\section{Integrating complex-valued embedding to general neural networks}
\label{sec:general_nn}

Neural networks are typically given real numbers as inputs and return real numbers as outputs. To accommodate complex numbers as in- and output, we devise a complex-valued version of various neural network layers i.e. complex-valued FastText with dense layer, CNN, and RNN. 
Unlike existing complex-valued neural networks \citep{trabelsi2017deep,Wolter2018Gated}, our feature layers are also converted into complex-valued layers.

\paragraph{Complex-valued FastText} FastTest \citep{joulin2016bag} is a simple and efficient neural network architecture using a dense layer over the sum of all word embeddings for general text classification. For a linear dense layer, i.e., $\vz =\textrm{dense} ( \vx+i\vy)$, where $\vx+i\vy $ and $\vz$ denote the complex-valued in- and output, respectively. Let $\mW =\mA + i\mB$ and $\vb = \vc + i\vd$ be complex-valued linear weights and bias, respectively. Then, the complex-valued dense layer is given by:
\begin{equation}
\label{eq:complex_dense}
\begin{aligned}
{\vz} = \sigma  \left( \mA \vx - \mB \vy + \vc) + i\sigma (\mB \vx + \mA \vy + \vd \right)
\end{aligned}
\end{equation}
\noindent where $\sigma$ is a real-valued activation function such as the sigmoid function. 
By rewriting (\ref{eq:complex_dense}) in matrix form \citep{trabelsi2017deep}, we obtain:
\begin{equation}
\label{eq:complex_matrix}
\begin{aligned}
  \begin{bmatrix}
    \Re(z)   \\
    \Im(z) 
  \end{bmatrix} =
  \begin{bmatrix}
    \sigma  ( \mA \vx - \mB \vy + \vc)   \\
    \sigma (\mB \vx + \mA \vy + \vd) 
  \end{bmatrix}
\end{aligned}
\end{equation}
where, for $z = x + iy$, $\Re(z) = x$ and $\Im(z) = y$. To save parameters and fairly compare with our real-valued baselines, the weights for real-part and imaginary-part input can be shared, i.e., $\mA =\mB, c=d$.
 
 
\paragraph{Complex-valued CNN} For the complex-valued version of the convolution operation \cite{trabelsi2017deep}, we similarly define a complex-valued convolution with separate real and imaginary kernels $\mA$ and $\mB$, to compute convolutions on the real and imaginary parts of the input in Eq. \ref{eq:complex_dense}. A complex-valued CNN network is constructed by stacking the operations based on a complex-valued convolution kernel, adding a complex dense layer in (\ref{eq:complex_dense}), and taking their norm as the final prediction in the last layer.
 

\paragraph{Complex-valued RNN}
The basic complex RNN formulation is:
\begin{equation}
\label{eq:complex_rnn}
\begin{aligned}
h_t^C = f\left( {{\mW^h}\vh_{t - 1} + {\mW ^z}\vz_t + {\vb }} \right)
\end{aligned}
\end{equation}
where ${\vz_t}$ and $\vh_t$ represent the complex-valued input and complex-value hidden state vectors at time $t$, ${{\vb}}$ is a complex-valued bias, ${{\mW ^h}}$ and ${{\mW ^z}}$ are complex-valued weight transitions for hidden state and input state, and $f\left( \vz \right) = \sigma \left( {\Re \left( \vz \right)} \right) + i \sigma \left( {\Im \left( \vz \right)} \right)$ is the activation function. 
The multiplication  ${\mW^h}\vh_{t - 1}$ and ${\mW ^z}\vz_t$ is computed as defined in (\ref{eq:complex_dense}) above. Similarly, the complex-valued gates are used in LSTM via operations as in (\ref{eq:complex_rnn}). In the final layer, a $l2$-norm operation is adopted to obtain a real-valued loss for backpropagation.


\paragraph{Complex-valued Transformer}
The main components in the Transformer are self-attention sublayers and position-wise feed-forward (FFN) sublayers.  
A self-attention sublayer employs $h$ attention heads and the concatenation of all heads is used as the output followed by a parameterized linear transformation. 
For a sequence embedded as complex-valued vector $input = \{ \vw_1, \vw_2, ..., \vw_n \}$, the output of each head is computed as a weighted sum of a linear transformation of the input sequence itself, namely 
\begin{equation}output_i = \sum_j a_{i,j}\vw_j \mW^V , 
\end{equation}
where $\vw_j$ is a complex-valued vector and $\mW^V$ is a complex linear transformation; therefore $output_i$ is also complex. Hence, $output$ is a sequence of complex-valued vectors with the same shape as $input$. The weight coefficient, $a_{i,j}$, which is defined as in real-valued domain, is calculated as the softmax of the product between complex-valued query vectors and key vectors: 
$a_{i,j} = \softmax \frac{ e_{i,j}}{ \sum_{i=1}^n e_{i,j}},$
and 
\begin{equation}
\label{eq:complex_attention}
\begin{aligned}
e_{i,j} =  \sqrt{\frac{  \Re(z)^2 + \Im(z)^2 }{n}}, z= \left( \vw_i \mW^Q \right)\left( \vw_j \mW^K \right)^ \dagger ,
\end{aligned}
\end{equation}
z is a complex number since $\vw_i$ and $\vw_j$ are complex-valued vectors and $\mW^Q,\mW^K$ are complex-valued transformation. To extend another variant of Transformer called Transformer XL, we keep its original relative position embedding and additionally replace its word embedding with our proposed embedding. 

Correspondingly, the FFN sublayer can easily be extended to a complex-valued version by replacing the real-valued layers with complex-valued ones. We use batch-normalization separately for the real and imaginary parts.

\section{Implementation of the proposed embedding}
\label{sec:code}

Words functions are implemented in neural networks by storing the function parameters $\{\vr, \boldsymbol{\omega}, \boldsymbol{\theta}\}$ and then construct the values based on the arguments. Based on the definition, the implementation of the proposed embedding can easily be implemented with only modifying the embedding layer. We list the basic code to construct our general embedding as below:

{\small
\begin{lstlisting}[language=Python]
import torch
import math
class ComplexNN(torch.nn.Module):
    def __init__(self, opt):
        super(ComplexNN, self).__init__()
        self.word_emb = torch.nn.Embedding(opt.n_token, opt.d_model)
        self.frequency_emb = torch.nn.Embedding(opt.n_token, opt.d_model)
        self.initial_phase_emb = torch.nn.Embedding(opt.n_token, opt.d_model)

    def get_embedding(self, x):

        amplitude = self.word_emb(x)
        frequency = self.frequency_emb(x)
        self.initial_phase_emb.weight = torch.nn.Parameter(self.initial_phase_emb.weight 
             % (2 * math.pi))

    	sent_len=x.size(-1)
    	pos_seq = torch.arange(1,  sent_len + 1, 1.0,device=amplitude.device)
    	
        pos_seq = pos_seq.unsqueeze(0).unsqueeze(-1)
        pos_seq = pos_seq.repeat([x.size(0),1,amplitude.size(-1)])

        dimension_bais = self.initial_phase_emb (x)

        enc_output_phase = torch.mul(pos_seq,frequency)+ dimension_bais
        enc_output_real = amplitude * torch.cos(enc_output_phase)
        enc_output_image = amplitude * torch.sin(enc_output_phase)
        # return torch.cat([enc_output_real,enc_output_image],-1)
        return enc_output_real,enc_output_image

    def forward(self, x) :
        return self.get_embedding(x)
    ...

\end{lstlisting}
} 

Note that both the frequency vectors $\boldsymbol{\omega}$ and initial-phase vectors $\boldsymbol{\theta}$ can be shared between words or dimensions, to save parameters. The proposed embedding can be also used in real-valued neural networks if one directly concatenates the real-part numbers and imaginary-part numbers as a double-size real-valued vector; therefore it could easily be extended in any existing networks without any complex-valued components. For instance, it could be a good extension for Transformer based pretrained models like \citep{devlin2018bert} by enriching the feature layer. 
{\color{black}
\section{Visualization of  frequencies/periods}
\label{sec:cases}

After training, we obtain the frequency vector $\boldsymbol{\omega}$ for each word. For each word, the mean value of the absolute frequency values, i.e., $\delta_j = \frac{1}{\vert D \vert} \sum_{d=1}^D \vert \omega_{j,d} \vert $ is considered as a metric to test the positional sensitivities of the word, since a period value could be negative during training. The density of the $\delta_j$ is shown in Fig.\ \ref{fig:period}.

\begin{figure}
    \centering
    \includegraphics[height=5.8cm]{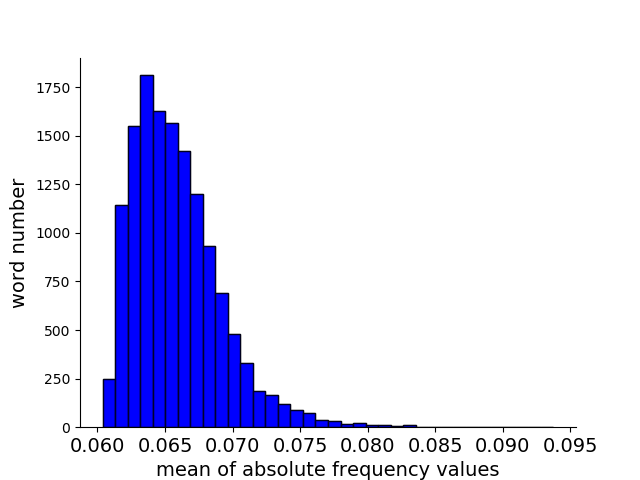}
    \caption{The distribution of the $\delta_j$. Higher values mean that the word representations are more sensitive to the word positions. }
    \label{fig:period}
\end{figure}


Words with the 50 greatest, and the 50 smallest, frequencies in the SST dataset are shown in Tab.\ \ref{table:words}.  For the words with greatest frequencies, most of them are strong sentiment words like ``worst'' ,``stupid'' and ``powerful''; a reason for this may be that such words appear in many positions in many documents during training, and thus they are more sensitive to the positions. Conversely, there are fewer words expressing strong sentiment among words with smaller frequencies, as shown in the second row. 


\begin{table}[h]
\begin{center}
\small

\begin{tabular}{ll}

 \hline
  &   words\\ \hline
\makecell[c]{greatest frequencies \\ in descending order}& \makecell[l]{
\textbf{worst} solid \textbf{stupid} \textbf{powerful} mess \textbf{wonderful} \textbf{remarkable} suffers intoxicating \textbf{thoughtful} \\
\textbf{rare} captures portrait gem frontal \textbf{terrific} \textbf{unique} wannabe \textbf{witty} \textbf{lousy} \\
\textbf{pointless} contrived none \textbf{worse} \textbf{refreshingly} \textbf{charming} inventive \textbf{amazing} \textbf{junk} incoherent\\ refreshing mediocre \textbf{unfunny} thinks \textbf{enjoyed} \textbf{heartbreaking} \textbf{delightfully} crisp \textbf{brilliant} heart\\
spirit \textbf{perfectly} nowhere mistake \textbf{engrossing} \textbf{fashioned} \textbf{excellent} \textbf{unexpected} \textbf{wonderfully} means
} \\  \hline
\makecell[c]{smallest frequencies \\ in ascending order} & \makecell[l]{slowly proposal schemes roiling juliette titles fabric superstar ah wow  \\
choreographed \textbf{tastelessness} beg \textbf{fabulous} muccino jacobi legendary jae rate example\\
code sensation counter deaths hall eun drug mctiernan storylines cellophane\\
wild motion ups trick comedy entertained mission \textbf{frightening} witnesses snoots\\
liners african groan satisfaction calm saturday estranged holm refuses \textbf{inquisitive}
}\\  
 \hline
\end{tabular}
\caption{Words with greatest frequencies and frequencies periods (based on $\delta_j$) in SST (a sentiment classification task), all words are converted to lower-case. The strong sentiment words are bold based on manual labeling. }\label{table:words}
\end{center}
\vspace{-5pt}
\end{table}

}



\end{document}